\documentclass[lettersize,journal]{IEEEtran}

\usepackage{microtype}
\usepackage{amsmath}
\usepackage{amsfonts}
\usepackage{amssymb}
\usepackage{mathtools}
\usepackage{amsthm}
\usepackage{graphicx}
\usepackage{booktabs}
\usepackage{multirow}
\usepackage[table]{xcolor}
\usepackage{array}
\usepackage[caption=false,font=normalsize,labelfont=sf,textfont=sf]{subfig}
\usepackage{textcomp}
\usepackage{stfloats}
\usepackage{url}
\usepackage{verbatim}
\usepackage{cite}
\usepackage{float}
\usepackage{hyperref}
\usepackage{algorithmic}
\usepackage{algorithm}

\theoremstyle{plain}
\newtheorem{theorem}{Theorem}[section]
\newtheorem{proposition}[theorem]{Proposition}

\theoremstyle{definition}
\newtheorem{definition}[theorem]{Definition}

\theoremstyle{remark}

\newcommand{\cmark}{\checkmark}
\newcommand{\xmark}{\times}

\begin{document}

\title{Rethinking Air-Ground Collaboration: A Progressive Cross-Task Benchmark and Socialized Learning Framework}

\author{Zhoupeng Guo, Yunqi Zhu, Zhihe Fan, Xinjie Yao,
        Ruipu Zhao, Boan Tao, Yiming Sun, Zhen Wang,
        and Pengfei Zhu%
\thanks{This work was supported in part by the National Natural Science Foundation of China under Grants 62436002, and 62506073, in part by Tianjin Natural Science Funds for Distinguished Young Scholar under Grant 23JCJQJC00270, and in part by the Xiongan New Area Science and Technology Innovation Special Project of the National Key Research and Development Program under Grant No.2025XAGG0039.}
\thanks{Zhoupeng Guo, Yiming Sun, and Pengfei Zhu are with the School of Automation, Southeast University, Nanjing 210096, China.}%
\thanks{Yunqi Zhu is with the School of Computer Science and Engineering, University of New South Wales, Sydney, NSW 2052, Australia.}%
\thanks{Zhihe Fan is with the School of Sports Training, Tianjin University of Sport, Tianjin 301617, China}%
\thanks{Xinjie Yao is with the Faculty of Information Engineering and Automation, Kunming University of Science and Technology, Kunming 650500, China.}%
\thanks{Ruipu Zhao, and Boan Tao are with the School of Artificial Intelligence, Tianjin University, Tianjin 300350, China.}%
\thanks{Zhen Wang is with the School of Artificial Intelligence, Hebei University of Technology, Tianjin 300401, China.}%
}%

\maketitle

\begin{abstract}
Air-ground collaborative perception is crucial for robust visual understanding in real-world dynamic environments. However, existing studies typically formulate collaboration as single-task cross-view fusion, overlooking the functional dependencies among localization, target association, and fine-grained parsing. In addition, the heterogeneous nature of aerial and ground views introduces substantial geometric, scale, and occlusion discrepancies, making uniform feature sharing vulnerable to negative transfer. To tackle these issues, we model air-ground perception as a progressive cross-task collaboration task and construct the Air-Ground Progressive Collaboration (AGPC) benchmark, a spatio-temporally aligned benchmark comprising more than 745K raw video frames. Built upon this benchmark, we propose Socialized Co-Perception (SCP), a coarse-to-fine framework that organizes collaboration progressively from aerial global localization to ground target association and identity-aware parsing. Its core module, the Dual-Layer Router (DLR), decouples input-side multi-scale expert selection from output-side task-conditioned modulation, enabling selective cross-view and cross-task interaction while suppressing harmful interference. Extensive experiments demonstrate the effectiveness of SCP. It achieves a 3.73\% coevolutionary gain and a 7.86\% improvement in average downstream performance. These results show that task-conditioned collaboration is more effective than uniform fusion for heterogeneous air-ground perception. The code is available at https://github.com/g1136639260-spec/AGSCP.


\end{abstract}

\begin{IEEEkeywords}
Air-ground collaboration, socialized co-perception, progressive collaboration, task-conditioned routing.
\end{IEEEkeywords}

\section{Introduction}
\label{sec:Introduction}


Analogous to animal societies, where cooperation arises from differentiated roles and complementary capabilities rather than uniform behavior~\cite{I1social,I1social2}, effective air-ground collaboration in machine systems requires more than the joint optimization of shared representations~\cite{R15}. However, existing approaches typically treat sub-tasks in isolation, failing to explicitly exploit their functional complementarity. Furthermore, the assumption that shared feature learning suffices for cross-view synergy breaks down in heterogeneous environments. Discrepancies in imaging geometry, scale, and occlusion induce distribution shifts that amplify negative transfer and destabilize representation learning~\cite{PCGrad20,CAGrad21}. These observations indicate that the core limitation lies not in model capacity, but in the implicit assumption of cross-view and cross-task homogeneity. Taken together, the problem is shaped jointly by cross-view heterogeneity across air-ground platforms and
  cross-task dependency across perception objectives.


In machine learning, air-ground collaboration and multi-task learning are closely related yet built upon different assumptions, leading to a paradigm mismatch. Existing air-ground studies mainly formulate cross-view cooperation around task-specific objectives such as localization, retrieval, tracking, and cooperative detection, where one view serves as a task prior for the other, but the interaction remains largely stage-isolated~\cite{SAFA19,VIGOR21,TransGeo22}. By contrast, representative multi-task frameworks mostly focus on single-view dense prediction and emphasize shared representations, adaptive sharing, loss balancing, or conflict-aware optimization over spatially aligned tasks~\cite{PCGrad20,CAGrad21,TaskPrompter23}. Although both lines succeed in their respective settings, in cross-view and cross-task scenarios, severe discrepancies between aerial and ground observations weaken the implicit single-view alignment assumption behind most shared-representation designs, systematically compounding negative transfer and hindering transferable representation learning.

\begin{figure}[t] 
    \centering
    \includegraphics[width=\linewidth]{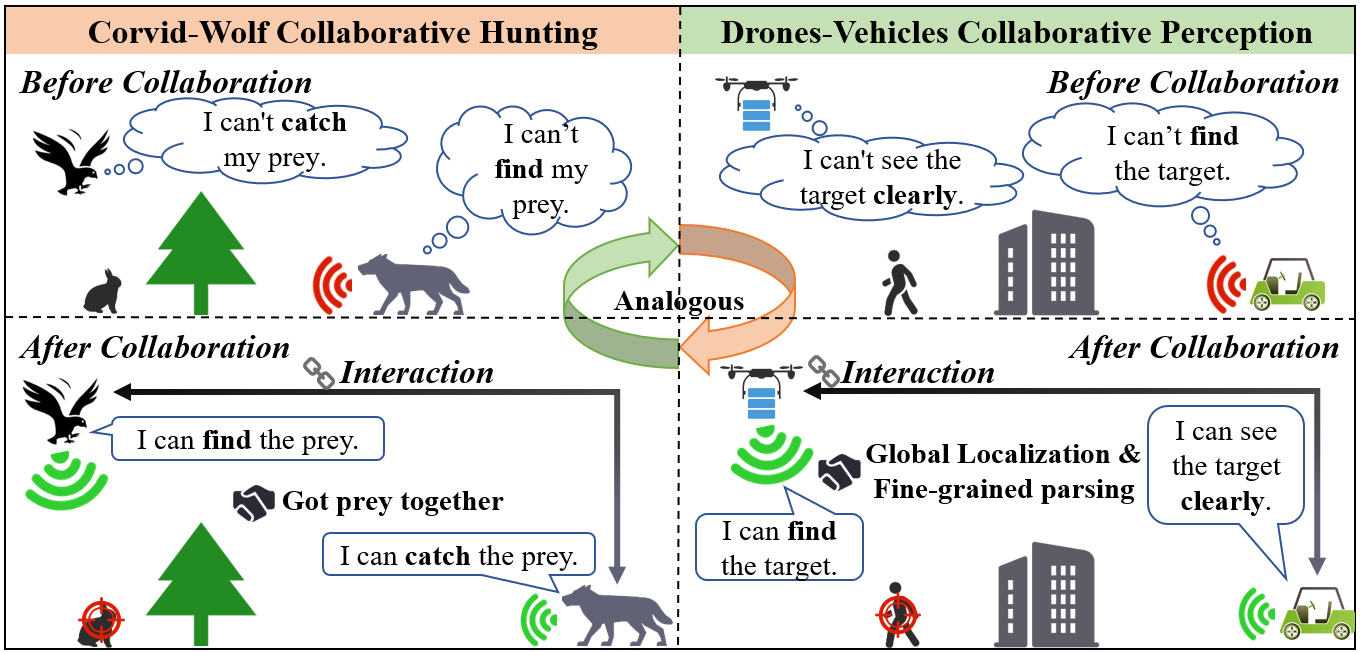}
    \caption{Social co-perception in animal and machine societies}
    \label{fig:inspired}
\end{figure}

To systematically study this joint cross-view and cross-task setting, we construct the Air-Ground Progressive Collaboration (AGPC) benchmark, which couples aerial and ground observations across localization, association, and parsing in a spatio-temporally aligned setting. Within this benchmark, however, existing paradigms remain limited by their underlying assumptions: air-ground methods typically emphasize cross-view cooperation around isolated objectives, whereas multi-task frameworks are largely developed for spatially aligned single-view settings. As a result, this setting exposes two key challenges for effective air-ground collaboration.

\begin{enumerate}
    \item[1:] \emph{How to eliminate task isolation for collaboration?}
    \item[2:] \emph{How to exploit cross-task air-ground complementarity?}
\end{enumerate}


Addressing the first challenge requires collaboration to be organized across task stages rather than remaining stage-isolated, whereas addressing the second requires a mechanism that can preserve complementary cues while reducing interference from heterogeneous views. To this end, we draw inspiration from sociobiology and ethology. As illustrated in Fig.~\ref{fig:inspired}, mutualism between ravens and wolves~\cite{I5} arises from asymmetric yet complementary roles, where aerial agents provide global localization and ground agents perform targeted capture. This mechanism highlights the importance of sociability~\cite{E1-dsc} in complex perception, where agents exchange conditional information to resolve mutual ambiguities. Guided by this insight, air-ground collaboration can be viewed as a coarse-to-fine inference process. In this paradigm, global cues guide cross-view interaction to support fine-grained task reasoning, while fine-grained expertise reciprocally reinforces preceding coarse-level representations through bi-directional coevolution. Such a formulation naturally enables progressive task coupling for alleviating task isolation and task-conditioned information exchange for exploiting cross-view complementarity.


To support the investigation of these problems, we first establish the AGPC benchmark, which comprises more than 745K raw video frames, 31,334 detection boxes, 4,319 ReID samples, and 4,268 segmentation instances. By organizing these tasks into a progressive evaluation chain from global localization to cross-view association and fine-grained parsing, AGPC provides an experimental basis for analyzing how cross-view cues propagate across tasks under realistic heterogeneity. Building on this benchmark, we develop Socialized co-Perception (SCP) as a concrete framework for progressive air-ground collaboration. Within this setting, SCP operationalizes sociability and complementarity through a coarse-to-fine perception pipeline and a Dual-Layer Router (DLR) that regulates information exchange to promote beneficial interactions while reducing interference. Our contributions are summarized as follows:

\begin{itemize}
\item We establish AGPC, an air-ground cross-task benchmark with over 745K real-world raw video frames, covering detection, ReID, and semantic segmentation, and enabling systematic evaluation of air-ground collaboration.
\item We propose the SCP framework, a socialized learning framework that realizes these principles through a coarse-to-fine design and a DLR module for cross-view and cross-task interaction. 
\item We provide an information-theoretic perspective on collaborative perception, identifying sociability and complementarity as key principles for air-ground collaboration. 
\end{itemize}

\begin{figure*}[t]
  \vskip 0.2in
  \centering
  \subfloat[]{\includegraphics[width=0.40\textwidth]{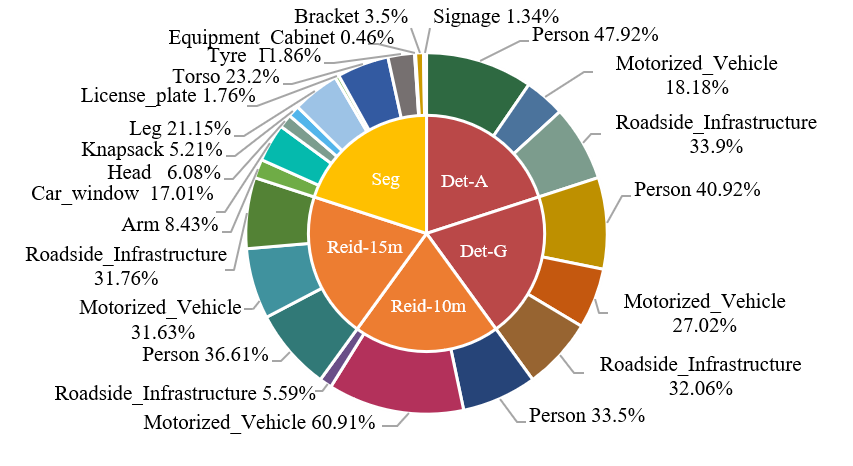}\label{fig:data_stat_a}}
  \hfill
  \subfloat[]{\includegraphics[width=0.58\textwidth]{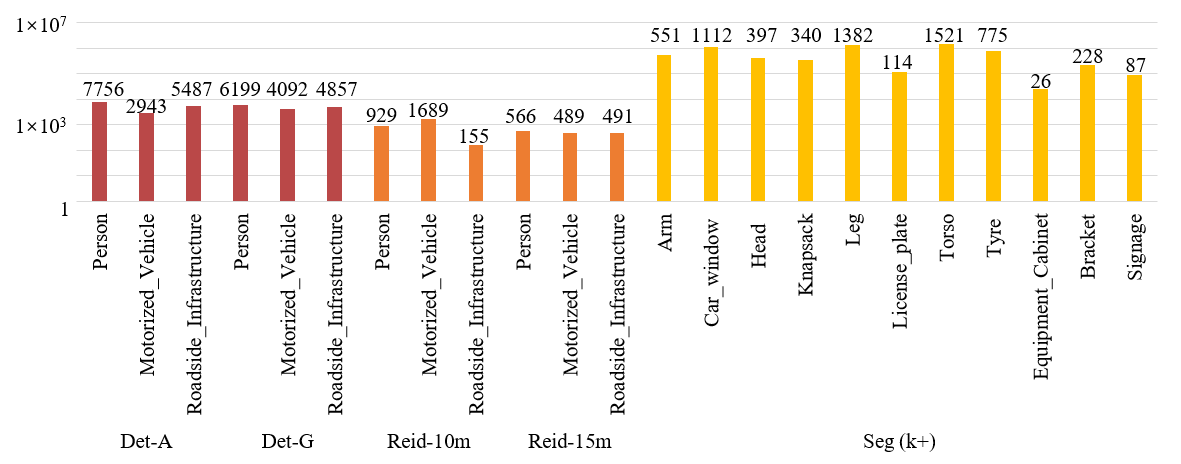}\label{fig:data_stat_b}}\\[0.5em]
  \subfloat[]{\includegraphics[width=0.9\textwidth]{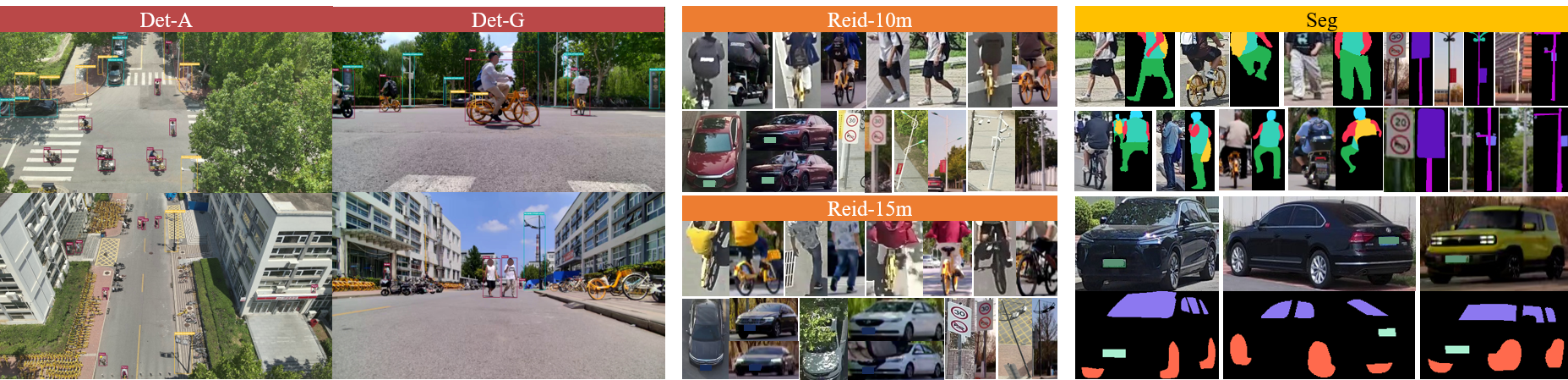}\label{fig:data_stat_c}}
  \caption{Statistical analysis and qualitative examples of AGPC dataset. 
  (a) The distribution of object categories and dataset types. 
  (b) The instance counts for detection and ReID, alongside pixel counts per category for segmentation.
  (c) Visualization of sample frames and annotations for the three core tasks.}
  \label{fig:data_statistics}
\end{figure*}

\section{Related Work}
\label{sec:related_work}

The proposed SCP framework targets progressive air-ground collaborative perception across heterogeneous views and multiple tasks. Therefore, this section briefly reviews related work on air-ground cooperative perception and multi-task learning.

\subsection{Air-Ground Cooperative Perception} 
Air-ground collaborative perception framework, as a paradigm for integrating heterogeneous aerial and ground visual streams, jointly models cross-view information to overcome the perception limits of single-platform sensing. Based on collaborative data dimensionality, existing air-ground methods can be grouped into two categories:
(1) Single-task 2D collaboration: This line treats one platform as a conditional prior for the other and emphasizes task-specific cross-view coupling. For detection, Yu et al.~\cite{R2} formulate air-ground cooperation as a hybrid-genetic task assignment process to optimize role allocation, whereas Chen et al.~\cite{R3} estimate target location from monocular Unmanned Aerial Vehicle (UAV) imagery under severe scale uncertainty. For identity-centric perception, Nguyen et al.~\cite{R4} establish cross-view air-ground ReID protocols with appearance-gap bridging, Nguyen et al.~\cite{R5} extend evaluation to real-world video ReID with stronger temporal and occlusion challenges, and Hambarde et al.~\cite{R6} provide stress-test conditions for UAV-based person detection and re-identification. For temporal association, Yan et al.~\cite{R24} combine aerial global search with ground local tracking, Cui et al.~\cite{R25} introduce hierarchical cooperative multi-target tracking using aerial fisheye cues, and Chen et al.~\cite{R26} perform active localization by coupling visual-inertial cues with single-range constraints.
(2) Single-task 3D collaboration: This branch mainly studies cross-agent 3D perception through staged fusion and cooperative message passing. Hu et al.~\cite{R8} and Gao et al.~\cite{R9} summarize the open challenges of connected perception, including synchronization, communication budget, and calibration dependence. Hou et al.~\cite{R10} provide real-world air-ground driving data, while Zhang et al.~\cite{R11} improve cooperative 3D detection via cross-agent feature fusion. Hao et al.~\cite{R12} further reveal practical bottlenecks in end-to-end V2X pipelines. To improve transferability and robustness, Li et al.~\cite{R13} learn domain-invariant representations for vehicle-infrastructure collaboration.

Existing 3D collaborative perception methods are often impractical in real-world settings due to their reliance on precise geometric alignment. While 2D collaboration offers greater flexibility, current frameworks largely remain confined to single-task cooperation. In contrast, SCP adopts a 2D interaction paradigm and jointly models aerial detection, cross-view search, and semantic segmentation, enabling unified multi-task collaboration under complex operational conditions.

\subsection{Multi-Task Learning}
Multi-Task Learning (MTL) improves generalization by jointly optimizing related tasks while balancing shared priors and task-specific specialization. Following the taxonomy summarized by Vandenhende et al.~\cite{R15}, existing MTL methods can be roughly grouped into three categories:
(1) Shared-representation dense prediction: This line focuses on how multiple tasks interact over common visual features. Xu et al.~\cite{R16} introduce a multi-query transformer to model task-aware interaction for dense prediction. Gao et al.~\cite{TIPPIA21} and Zhang et al.~\cite{TIPMaskSSD20} study unified feature representations for panoptic or instance-level segmentation and detection-style objectives, showing that carefully designed shared embeddings can support multiple dense prediction tasks. Wang et al.~\cite{R18} further extend this direction by adapting Segment Anything as a foundation prior for multi-task fine-tuning.
(2) Scalable and heterogeneous MTL frameworks: This line emphasizes scaling MTL to broader model families, data sources, and modality settings. Chen et al.~\cite{R19} align large vision foundation models for generic visual-linguistic tasks, Lu et al.~\cite{R20} model client heterogeneity in federated multi-task learning, and Sun et al.~\cite{R21} unify multimodal tasks through LLM-based neural tuning.
(3) Flexible task composition and coupled supervision: This line relaxes fixed sharing hierarchies and exploits task complementarity more explicitly. Long et al.~\cite{R22} model multilinear relationships among tasks. For coupled visual supervision, Yang et al.~\cite{TIPMTML25} show that mutual learning between detection and segmentation can significantly enhance infrared small target extraction. Kim et al.~\cite{TIPVPS22} and Wang et al.~\cite{TIPIMT23} further extend such joint optimization to the temporal domain by combining instance-level localization, semantic parsing, and temporal association in video panoptic segmentation.

Most multi-task learning methods are developed under single-view assumptions. When extended to air-ground collaboration, discrepancies in geometry, scale, and occlusion induce distribution shifts that amplify negative transfer under shared-representation paradigms. SCP addresses this limitation by explicitly structuring cross-view and cross-task interactions, enabling transferable and interference-aware representation learning beyond conventional shared representations.

\section{AGPC Benchmark}

This section introduces the AGPC benchmark in a systematic manner. We first define the target task and evaluation metrics, and then describe the key characteristics, statistical properties, and comparative positioning of AGPC with respect to existing benchmarks. Finally, we present the benchmark construction pipeline to show how AGPC is designed to support rigorous evaluation of progressive air-ground collaborative perception.

\subsection{Task and Metrics}
\subsubsection{Task Definition} Unlike traditional independent perception tasks, we designate the process where a designated target observed in the aerial view is associated and localized within the ground view, followed by fine-grained feature analysis, as the air-ground target-guided parsing task. Specifically, this task encompasses four sub-tasks: aerial detection, re-identification, ground detection, and semantic segmentation. Formally, our collaborative framework consists of spatial perception models parameterized by $\Theta_{co} = \{\Theta_{det}, \Theta_{seg}\}$, and a shared dual-layer router parameterized by $\Phi$. We restrict the joint optimization to the spatial sub-task set $\mathcal{T}_{co}=\{det^{a}, det^{g}, seg\}$. To achieve performance coevolution, we optimize both the task representations and the routing mechanism by minimizing the cumulative perception error across this subset:
\begin{equation}
    \mathcal{J}(\Theta_{co}, \Phi) = \sum_{t \in \mathcal{T}_{co}} \mathcal{L}_t (\hat{y}_t, y_t),
\end{equation}
where $\mathcal{L}_t$ represents the specific loss function for sub-task $t$, and the prediction $\hat{y}_t$ is the result optimized by $\Phi$. Meanwhile, the ReID model is optimized independently to preserve identity-level purity; it is used specifically for cross-view association and does not participate in the DLR-driven feature interaction.

\vspace{1ex}
\subsubsection{Evaluation Metrics} 
To evaluate both the holistic system capability and cooperative efficacy, we employ a dual-metric strategy: Average Downstream Performance (ADP) ensures a balanced assessment of multi-granularity perception across heterogeneous tasks, while Coevolutionary Gain ($G_{co}$) quantifies the relative improvements driven by interactive coevolution.
To provide a holistic and balanced evaluation of the cooperative system across heterogeneous tasks, we propose the ADP metric. It is calculated as the geometric mean of three-task downstream performance scores:
\begin{equation}
\resizebox{0.9\linewidth}{!}{$ 
\text{ADP} = \sqrt[3]{ \left(\frac{1}{2} \sum_{d \in \{A, G\}} \text{mAP}_{\text{det}\_d}\right) \cdot \text{mIoU}_{\text{seg}} \cdot \text{Rank-1}_{\text{reid}} }
$}.
\end{equation}
ADP is intended as a supplementary system-level metric rather than a replacement for standard task-wise evaluations. We adopt the geometric mean because the proposed framework is progressive, with coarse localization, cross-view association, and fine-grained parsing being sequentially dependent; consequently, weakness in any stage can bottleneck the overall collaborative performance. Compared with the arithmetic mean, the geometric mean penalizes stage-wise imbalance more strongly and thus better reflects balanced downstream capability. 

To quantify the effectiveness of the coevolutionary interaction facilitated by DLR, we define the coevolutionary gain. It measures the relative improvement of the cooperative perception tasks, specifically aerial detection and ground segmentation, compared to the non-interacting baseline:
\begin{equation}
G_{co} = \frac{1}{2} \left( \frac{P_{\text{det}}^{\text{coop}} - P_{\text{det}}^{\text{base}}}{P_{\text{det}}^{\text{base}}} + \frac{P_{\text{seg}}^{\text{coop}} - P_{\text{seg}}^{\text{base}}}{P_{\text{seg}}^{\text{base}}} \right),
\end{equation}
where $P^{\text{coop}}$ and $P^{\text{base}}$ denote the performance of the proposed SCP and the non-collaborative baseline, respectively.

\subsection{Benchmark Characteristics}
\label{sec:benchmark_features}

Our benchmark distinguishes itself through five key architectural features designed to advance collaborative perception research:

\subsubsection{Spatiotemporally Aligned Video-Based Construction}
Built from 89 synchronized video sequences captured by UAVs and Unmanned Ground Vehicles (UGVs), our benchmark ensures spatiotemporal alignment across views. Its video-based design preserves realistic disturbances such as motion blur, providing a practical testbed for multi-task learning while challenging model robustness in dynamic environments.

\subsubsection{Hierarchical Multi-Granularity Perception}
The three core tasks are derived from the same video streams and integrated into a unified closed-loop framework spanning coarse-grained detection, identity-level ReID, and fine-grained segmentation. This design enables holistic evaluation of localization, identification, and parsing within a shared context, while introducing the challenge of cross-granularity feature alignment.

\subsubsection{Cross-Altitude Domain Adaptability}
To reflect realistic UAV deployment, we collect UAV videos at two altitudes (10m and 15m) and construct the corresponding ReID dataset with an explicit cross-altitude domain gap. This setting encourages scale-invariant representation learning, but also introduces substantial geometric variation and visual degradation, requiring stronger cross-domain generalization.

\subsubsection{High Information Density and Non-Redundancy}
Detection and segmentation data are sampled from the same aligned air-ground videos, with segmentation targets anchored to objects associated through ReID. All object instances are unique, and each image is selected to capture distinctive poses, occlusions, or motion blur, reducing redundancy and discouraging shortcut learning from background cues. This results in a more rigorous evaluation while demanding stronger feature discrimination.

\subsubsection{Cross-Seasonal Diversity}
Furthermore, data acquisition spanned Summer, Autumn, and Winter, providing broad cross-seasonal diversity and featuring challenging environmental variations to verify system robustness.

\begin{table*}[t]
    \caption{Statistical overview of the AGPC dataset. The table details the distribution of samples and pixels across four tasks.  For detection and ReID,
  ``Samples'' denote bounding-box instances (GT boxes for ReID). For segmentation, ``Samples'' denote annotated image instances, while category statistics are additionally reported with pixel-level counts (in Millions `` M'' or Thousands ``k'') to highlight annotation density.}
  \label{tab:dataset_statistics_detailed}
  \begin{center}
    \begin{small}
        \renewcommand{\arraystretch}{1.2}
        \resizebox{0.95\textwidth}{!}{
        \begin{tabular}{l p{10cm} l r r}
          \toprule
          Task & Category Distribution (Instances/Pixels) & Split & Samples & Pct. \\
          \midrule
          
          \multirow{4}{*}{Aerial detection} & 
          \multirow{4}{*}{\parbox{10cm}{\raggedright 
          Instances per Category:\\
          Person (7,756), Motorized Vehicle (2,943), Roadside Infrastructure (5,487)}} & 
          Training & 11,521 & 70\% \\
           & & Validation & 3,176 & 20\% \\
           & & Testing & 1,489 & 10\% \\
           & & Total & 16,186 & 100\% \\
          \cmidrule(r){1-5}
          
          \multirow{4}{*}{Ground detection} & 
          \multirow{4}{*}{\parbox{10cm}{\raggedright 
          Instances per Category:\\
          Person (6,199), Motorized Vehicle (4,092), Roadside Infrastructure (4,857)}} & 
          Training & 10,599 & 70\% \\
           & & Validation & 3,015 & 20\% \\
           & & Testing & 1,534 & 10\% \\
           & & Total & 15,148 & 100\% \\
          \cmidrule(r){1-5}
          
          \multirow{3}{*}{Re-identification} & 
          \multirow{3}{*}{\parbox{10cm}{\raggedright 
          10m Altitude: Person (929), Veh (1,689), Infra (155)\\
          15m Altitude: Person (566), Veh (489), Infra (491)}} & 
          Training & 3,088 & 70\% \\
           & & Testing & 1,231 & 30\% \\
           & & Total & 4,319 & 100\% \\
          \cmidrule(r){1-5}
          
          Ground segmentation &
          \multirow{4}{*}{\parbox{10cm}{\raggedright 
          Pixel Count per Category:\\
          Arm (0.5M), Car window (1.1M), Head (0.4M), Knapsack (0.3M), Leg (1.3M), License plate (0.1M), Torso (1.5M), Tyre (0.7M), Equip. Cabinet (26k), Bracket (0.2M), Signage (87k)
          }} & 
          Training & 3,019 & 70\% \\
           & & Validation & 838 & 20\% \\
           & & Testing & 411 & 10\% \\
           & & Total & 4,268 & 100\% \\
          \bottomrule
        \end{tabular}
        }
    \end{small}
  \end{center}
  \vskip -0.1in
\end{table*}

\subsection{Benchmark Statistics}
\label{sec:benchmark_stats}
The AGPC benchmark is a real-world dataset comprising over 745K raw video frames across 89 scene clips, specifically designed to advance cross-view and cross-task perception. As detailed in Table~\ref{tab:dataset_statistics_detailed}, it provides high-quality annotations for four core tasks, namely aerial detection, ground detection, ReID, and semantic segmentation, all structured through a hierarchical taxonomy. This framework targets three macro-categories for detection and ReID, which are further refined into 11 fine-grained sub-categories for ground segmentation. Quantitatively, the dataset includes 31,334 detection bounding boxes, 4,319 ReID samples, and 4,268 segmentation instances. To ensure robust evaluation, the data is partitioned: detection and segmentation tasks follow a 7:2:1 split, while the ReID task adopts a 7:3 train-test ratio. This distribution enables a coarse-to-fine evaluation of multi-view cooperative perception capabilities.

\subsection{Positioning Against Existing Benchmarks}
\label{sec:benchmark_positioning}
Table~\ref{tab:benchmark_positioning} compares our dataset with representative air-ground collections, cooperative perception benchmarks and other mainstream public datasets. The comparison follows three key dimensions of this work, including targeted collaboration modes, explicit modeling of cross-view interaction, and enhanced cross-task support beyond basic multi-task annotations.

\begin{table*}[t]
    \caption{Positioning AGPC against representative air-ground datasets, cooperative perception benchmarks, and related public datasets. ``Mode'' indicates the collaboration relation, including vehicle-vehicle (Veh-Veh), vehicle-infrastructure (Veh-Inf), air-air (Air-Air), and vehicle-air (Veh-Air).}
    \label{tab:benchmark_positioning}
    \begin{center}
    \begin{small}
        \renewcommand{\arraystretch}{1.2}
        \resizebox{0.95\textwidth}{!}{
        \begin{tabular}{l c c c c c c c c}
            \toprule
            Dataset & Mode & Setting & Cross-view & Air-ground & Cross-task & Detection & ReID & Seg\\
            \midrule
            BDD100K (CVPR 2020)~\cite{B1} & $-$ & 2D & $\xmark$ & $\xmark$ & $\xmark$ & $\cmark$ & $\xmark$ & $\cmark$ \\
            DAIR-V2X (CVPR 2022)~\cite{I3} & Veh-Inf & 3D & $\cmark$ & $\xmark$ & $\xmark$ & $\cmark$ & $\xmark$ & $\xmark$ \\
            MURI (TIP 2023)~\cite{T2} & Inf-Inf & 2D & $\cmark$ & $\xmark$ & $\xmark$ & $\xmark$ & $\cmark$ & $\xmark$ \\
            V2X-Seq (CVPR 2023)~\cite{I4} & Veh-Inf & 3D & $\cmark$ & $\xmark$ & $\xmark$ & $\cmark$ & $\xmark$ & $\xmark$ \\
            Cityscapes-VPS/VIPER (TIP 2022)~\cite{TIPVPS22} & $-$ & 2D & $\xmark$ & $\xmark$ & $\cmark$ & $\cmark$ & $\xmark$ & $\cmark$ \\
            AG-ReID.v2 (TIFS 2024)~\cite{R4} & Inf-Air & 2D & $\cmark$ & $\cmark$ & $\xmark$ & $\xmark$ & $\cmark$ & $\xmark$ \\
            AG-VPReID (CVPR 2025)~\cite{R5} & Inf-Air & 2D & $\cmark$ & $\cmark$ & $\xmark$ & $\xmark$ & $\cmark$ & $\xmark$ \\
            AGC-Drive (NeurIPS 2025)~\cite{R10} & Veh-Air & 3D & $\cmark$ & $\cmark$ & $\xmark$ & $\cmark$ & $\xmark$ & $\xmark$ \\
            Griffin (AAAI 2026)~\cite{I1griffin} & Veh-Air & 3D & $\cmark$ & $\cmark$ & $\xmark$ & $\cmark$ & $\xmark$ & $\xmark$ \\
            \rowcolor[gray]{0.93}
            \textbf{AGPC (Ours)} & \textbf{Veh-Air} & \textbf{2D} & $\cmark$ & $\cmark$ & $\cmark$ & $\cmark$ & $\cmark$ & $\cmark$ \\
            \bottomrule
        \end{tabular}
        }
    \end{small}
    \end{center}
    \vskip -0.1in
\end{table*}

Table~\ref{tab:benchmark_positioning} shows that existing datasets only cover partial aspects of the target problem addressed by AGPC. AG-ReID.v2~\cite{R4} and AG-VPReID~\cite{R5} focus on 2D air-ground collaboration, yet their evaluations are limited to ReID and human-centric perception, rather than progressive cross-task cooperation. Griffin~\cite{I1griffin}, AGC-Drive~\cite{R10}, DAIR-V2X~\cite{I3}, and V2X-Seq~\cite{I4} concentrate on 3D cooperative perception under vehicle-UAV and vehicle-infrastructure modes, with an emphasis on geometric fusion and collaborative detection. Meanwhile, datasets including BDD100K~\cite{B1} and the VCNet unified dataset~\cite{T2} only support partial task coverage and are not tailored for air-ground collaboration. Cityscapes-VPS/VIPER~\cite{TIPVPS22} targets single-view panoptic video parsing without air-ground cross-view coupling. This clearly reveals a notable gap in existing benchmarks, as there is still no 2D vehicle-UAV benchmark that simultaneously supports detection, ReID, and segmentation, and explicitly evaluates progressive cross-task interaction. AGPC is designed to address this gap.

\subsection{Benchmark Construction}
\label{sec:benchmark_gen}

\subsubsection{Data Collection}
To construct the benchmark, we developed a synchronized UAV-UGV platform for real-world cross-view data collection. The UGV utilized a 3-axis stabilized 4K gimbal camera (SIYI ZT6), while the UAV (DJI Mavic 3TA) employed its wide-angle and telephoto sensors to capture high-resolution imagery. To ensure consistent cross-view overlap, the platforms were spatially coordinated: the UAV maintained a $45^{\circ}$ oblique view at altitudes of 10m and 15m, synchronized with the horizontal front-facing perspective of the UGV.

\subsubsection{Annotation Strategy} 
We provide high-quality 2D annotations for three core tasks: object detection, ReID, and semantic segmentation. All annotations were manually generated to ensure ground-truth fidelity. 
All benchmark annotations are derived from the same raw video corpus. Specifically, detection images are obtained by performing quality-controlled and redundancy-aware key frame sampling from the raw video frames in both aerial and ground streams. The ReID dataset is further constructed from the same raw video corpus by selecting synchronized cross-view target samples and assigning identity labels to their ground-truth bounding boxes. Based on the established ReID image set, the segmentation dataset is further annotated on selected ReID images, making the segmentation task a fine-grained subset of the ReID benchmark. This design ensures that detection, ReID, and segmentation are progressively connected rather than independently collected.


\section{Socialized Co-Perception}

This section first introduces the two fundamental principles underlying the SCP air-ground collaborative framework, namely sociability and complementarity, and then elaborates on the overall architecture of SCP and the working principle of its core module, DLR.

\subsection{Sociability and Complementarity}
This section explores two fundamental principles of air-ground collaboration from an information-theoretic perspective. First, positive social interactions between cross-view air-ground models are beneficial, demonstrating the importance of sociability. Second, inherent cross-view visual disparities and task constraints can lead to cross-view interference, thereby reducing the effectiveness of auxiliary information. Therefore, effectively filtering complementary information between models is also crucial for air-ground collaboration. These theoretical insights motivate our design, specifically the cross-task coevolution in SCP and the multi-layer routing filtration in DLR.

\begin{figure*}[t]
  \vskip 0.2in
  \begin{center}
    \centerline{\includegraphics[width=\textwidth]{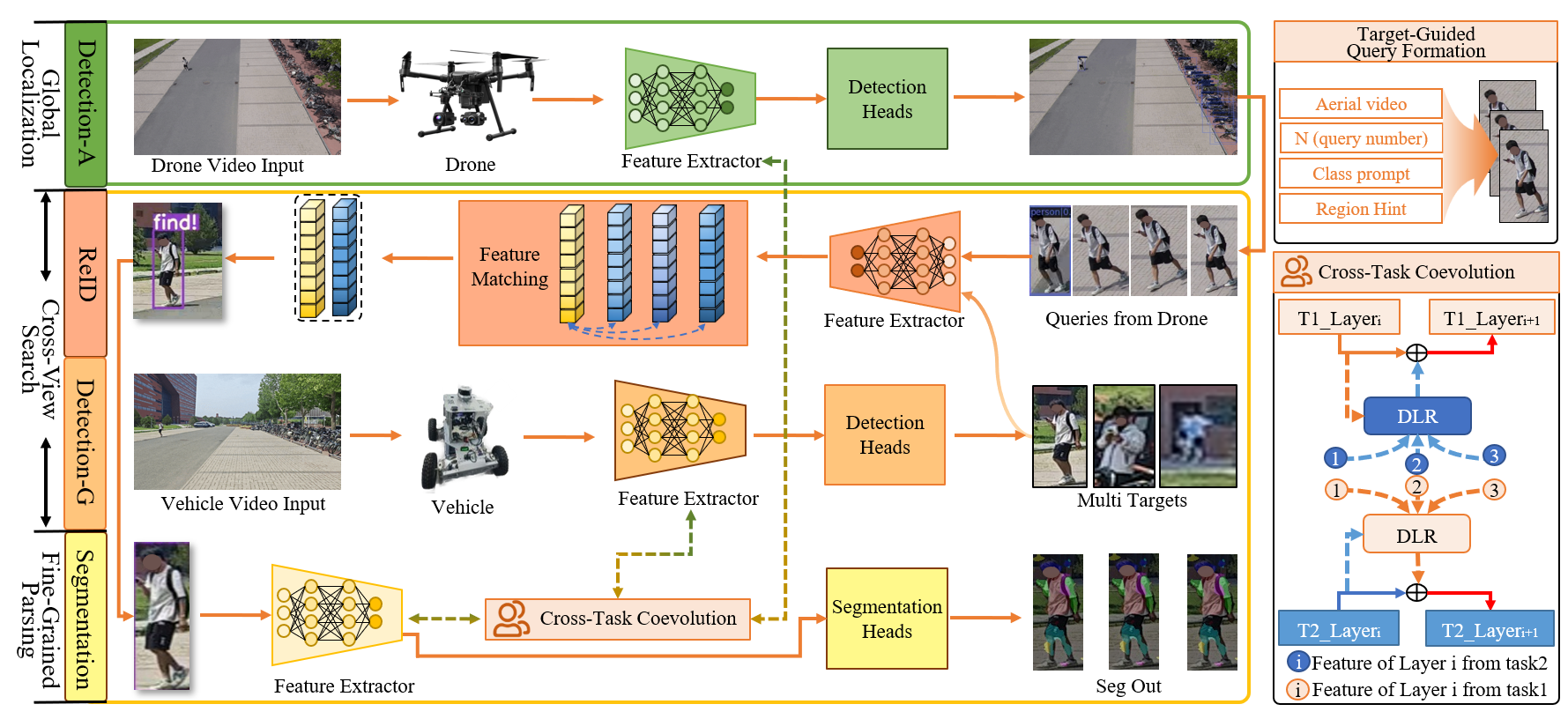}}
    \caption{The proposed SCP framework. It establishes a progressive inference pipeline comprising three stages: (1) Global localization; (2) Cross-view search; and (3) Semantic segmentation. The right panel illustrates the cross-task coevolution, utilizing the DLR and main-auxiliary training strategy.}
    \label{fig:SCP}
  \end{center}
\end{figure*}

\label{sec:sociability}

\noindent \textbf{Problem setup:}
Let $\mathcal{S} = \{\mathcal{M}_{m}, \mathcal{M}_{a}\}$ be a society with a main task model $\mathcal{M}_{m}$ and an auxiliary model $\mathcal{M}_{a}$. Let $X_{m}, X_{a} \in \mathcal{X}$ denote their respective input space, and $Y \in \mathcal{Y}$ the target label space for the main task.
In the isolated setting, the prediction is based on $P(Y|X_{m})$. In the coevolutionary setting, the system utilizes the joint information $P(Y|X_{m}, X_{a})$.

\begin{definition}[Sociability Information]
    \label{def:sociability}
    The sociability information $I_{s}$ provided by the auxiliary view is defined as the conditional mutual information between $X_{a}$ and $Y$, given $X_{m}$:
    \begin{equation}
        \resizebox{0.8\linewidth}{!}{$
            I_{s} = I(X_{a}; Y | X_{m}) = H(Y | X_{m}) - H(Y | X_{m}, X_{a}).
        $}
    \end{equation}
    Here, $I_{s} > 0$ implies that $\mathcal{M}_{a}$ contributes unique information reducing the uncertainty of $Y$.
\end{definition}

\noindent \textbf{Bayes error rate:} The Bayes error rate~\cite{Bayes_error_rate} represents the minimum achievable probability of error. We define the error bounds for the isolated main task ($P^{iso}_{e}$) and the coevolutionary society ($P^{co}_{e}$) as follows:
\begin{equation}
    P^{iso}_{e} \leq 1 - \exp\big(-H(Y|X_{m})\big),
    \label{eq:bayes_bound_iso}
\end{equation}
\begin{equation}
    P^{co}_{e} \leq 1 - \exp\big(-H(Y|X_{m}, X_{a})\big).
    \label{eq:bayes_bound_co}
\end{equation}
 
\begin{proposition}[Necessity of Sociability]
    \label{thm:error_reduction}
    Building on prior work~\cite{icml2024,theory2,theory3}, if the sociability information $I_{s} > 0$, the upper bound of the Bayes error rate for the collaborative society is lower than that of the isolated view, $\text{Upper}(P^{co}_{e}) < \text{Upper}(P^{iso}_{e})$.
\end{proposition}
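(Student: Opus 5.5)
The plan is to compare the two upper bounds in \eqref{eq:bayes_bound_iso} and \eqref{eq:bayes_bound_co} directly. Both bounds are obtained by applying the same map $f(h) = 1 - \exp(-h)$ to a conditional entropy. So it suffices to show two things: the entropy appearing in the collaborative bound is strictly smaller, and $f$ is strictly increasing.

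\textbf{Step 1 (entropy gap).} By Definition~\ref{def:sociability},
\[
H(Y\mid X_m, X_a) = H(Y\mid X_m) - I_s .
\]
Under the hypothesis $I_s>0$ this gives $H(Y\mid X_m,X_a) < H(Y\mid X_m)$. I will note that conditional mutual information is always nonnegative, so $I_s \ge 0$ holds in general. The hypothesis is therefore exactly what upgrades the weak inequality to a strict one. I will also note that all quantities must be finite for the subtraction to be meaningful. For a finite label space $\mathcal{Y}$ this is automatic, since $H(Y\mid X_m) \le \log|\mathcal{Y}|$.

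\textbf{Step 2 (monotonicity).} The derivative of $f$ is $f'(h) = \exp(-h) > 0$ for all real $h$, so $f$ is strictly increasing on $[0,\infty)$. Applying $f$ to the strict inequality from Step 1 gives
\[
\text{Upper}(P^{co}_e) = 1 - \exp\big(-H(Y\mid X_m,X_a)\big) < 1 - \exp\big(-H(Y\mid X_m)\big) = \text{Upper}(P^{iso}_e).
\]
Quantitatively, the gap is
\[
\exp\big(-H(Y\mid X_m)\big)\big(e^{I_s}-1\big) > 0,
\]
which I will record as a remark.

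\textbf{Main obstacle.} The argument itself is immediate, so the only delicate point is interpretive. The statement concerns the \emph{upper bounds}, not the Bayes errors themselves. I will not claim $P^{co}_e < P^{iso}_e$ in general. I will take the bounds \eqref{eq:bayes_bound_iso}--\eqref{eq:bayes_bound_co} as given from the cited works, namely the Fano/Hellman--Raviv-type relations in \cite{icml2024,theory2,theory3}. I will also make sure both bounds are read with the same logarithm base (natural log, to match $\exp$), so that the comparison is between the same functional of the two entropies.
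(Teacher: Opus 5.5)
Your proposal is correct and matches the paper's proof. The paper substitutes $H(Y\mid X_m,X_a)=H(Y\mid X_m)-I_s$ into the collaborative bound, factors out $\exp(I_s)>1$, and compares the result with the isolated bound; this is your monotonicity-of-$1-e^{-h}$ argument written out, and your gap term $\exp(-H(Y\mid X_m))(e^{I_s}-1)$ is exactly the difference behind the paper's final inequality. Your caveats are sensible refinements the paper leaves implicit: you compare only the bounds rather than the Bayes errors themselves, you require the entropies to be finite, and you use natural logarithms to match $\exp$.
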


\begin{proof}
    Based on Eq.~\eqref{eq:bayes_bound_co}, the error bound for the society is expressed as follows:
    \begin{equation}
        \resizebox{0.6\linewidth}{!}{$
            P^{co}_{e} \leq 1 - \exp\big(-H(Y|X_{m}, X_{a})\big).
        $}
    \end{equation}
    Substituting the entropy reduction from Def.~\ref{def:sociability}, we can derive:
    \begin{equation}
        \resizebox{0.6\linewidth}{!}{$
            \begin{aligned}
                P^{co}_{e} &\leq 1 - \exp\big(-(H(Y|X_{m}) - I_{s})\big).
            \end{aligned}
        $}
    \end{equation}
    Consequently
    \begin{equation}
        \resizebox{0.6\linewidth}{!}{$
            \begin{aligned}
                P^{co}_{e} &\leq 1 - \exp\big(-H(Y|X_{m})\big) \cdot \exp(I_{s}).
            \end{aligned}
        $}
    \end{equation}
    Since $I_{s} > 0$, we have $\exp(I_{s}) > 1$. Comparing this with the isolated bound, we have:
    \begin{equation}
        \resizebox{0.8\linewidth}{!}{$
            1 - \exp\big(-H(Y|X_{m})\big) \cdot \exp(I_{s}) < 1 - \exp\big(-H(Y|X_{m})\big).
        $}
    \end{equation}
    Thus, we conclude:
    \begin{equation}
        \resizebox{0.4\linewidth}{!}{
           $\text{Upper}(P^{co}_{e}) < \text{Upper}(P^{iso}_{e}).$
        }
    \end{equation}
\end{proof}

\begin{definition}[Task-Relevant Information Degradation under Interference]
    \label{def:interference_degradation}
    Let $X_{a}^{*}$ denote the ideal, interference-free auxiliary representation, and $X_{a}$ denote the interference-corrupted auxiliary representation. Due to interference, the task-relevant information of the corrupted input ($I_{s}^{base}$) is less than that of the ideal input ($I_{s}^{ideal}$):
    \begin{equation}
        I_{s}^{base} = I(X_{a}; Y | X_{m}) < I(X_{a}^{*}; Y | X_{m}) = I_{s}^{ideal}.
    \end{equation}
    We define $\Delta = I_{s}^{ideal} - I_{s}^{base} > 0$ as the information gap caused by interference.
\end{definition}

\begin{proposition}[Necessity of Complementarity Filtering]
    \label{thm:complementarity_necessity}
    The Bayes error upper bound of the system using base auxiliary data ($P^{base}_{e}$) is higher than that of the system using ideal filtered data ($P^{ideal}_{e}$). Thus, a filtering mechanism is theoretically motivated under this formulation to minimize the error bound.
\end{proposition}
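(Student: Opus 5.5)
The plan mirrors the proof of Proposition~\ref{thm:error_reduction}. I will apply the coevolutionary bound in Eq.~\eqref{eq:bayes_bound_co} twice, keeping $X_{m}$ and $Y$ fixed. The first application uses the corrupted auxiliary representation $X_{a}$:
\begin{equation*}
P^{base}_{e} \leq 1 - \exp\big(-H(Y|X_{m}, X_{a})\big).
\end{equation*}
The second uses the ideal representation $X_{a}^{*}$:
\begin{equation*}
P^{ideal}_{e} \leq 1 - \exp\big(-H(Y|X_{m}, X_{a}^{*})\big).
\end{equation*}
I denote these right-hand sides by $\text{Upper}(P^{base}_{e})$ and $\text{Upper}(P^{ideal}_{e})$. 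The statement then reduces to comparing the two conditional entropies inside the exponentials.

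Next, I will rewrite both entropies through Definition~\ref{def:sociability}. The identity $I(X;Y|X_{m}) = H(Y|X_{m}) - H(Y|X_{m},X)$ holds for any auxiliary variable $X$, so it gives
\begin{equation*}
H(Y|X_{m}, X_{a}) = H(Y|X_{m}) - I_{s}^{base}, \qquad H(Y|X_{m}, X_{a}^{*}) = H(Y|X_{m}) - I_{s}^{ideal}.
\end{equation*}
By Definition~\ref{def:interference_degradation}, subtracting these two lines yields
\begin{equation*}
H(Y|X_{m}, X_{a}) - H(Y|X_{m}, X_{a}^{*}) = I_{s}^{ideal} - I_{s}^{base} = \Delta > 0.
\end{equation*}
Substituting into the base bound, as in the earlier proof, gives
\begin{equation*}
\text{Upper}(P^{base}_{e}) = 1 - \exp\big(-H(Y|X_{m}, X_{a}^{*})\big)\cdot \exp(-\Delta).
\end{equation*}

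Finally, $\Delta > 0$ implies $\exp(-\Delta) < 1$. Since $\exp\big(-H(Y|X_{m}, X_{a}^{*})\big) > 0$, this gives $\text{Upper}(P^{base}_{e}) > \text{Upper}(P^{ideal}_{e})$. Equivalently, the map $h \mapsto 1 - \exp(-h)$ is strictly increasing, and the base entropy is strictly the larger of the two. The remark about a filtering mechanism is then a direct interpretation. Any operation that moves the auxiliary representation from $X_{a}$ toward $X_{a}^{*}$ recovers part of $\Delta$ and so lowers the bound. This motivates the routing filtration in DLR.

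The main obstacle is conceptual rather than computational. The comparison concerns only the upper bounds, not the true Bayes errors. I must therefore check that Eq.~\eqref{eq:bayes_bound_co} applies equally to $X_{a}^{*}$, which requires a well-defined joint distribution of $(X_{m}, X_{a}^{*}, Y)$ with the same $X_{m}$ and $Y$. I will state this explicitly and phrase the conclusion as a statement about bounds, in the same spirit as Proposition~\ref{thm:error_reduction}.
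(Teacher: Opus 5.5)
Your proposal is correct and is essentially the paper's argument: both proofs apply the bound of Eq.~\eqref{eq:bayes_bound_co} to $X_a$ and to $X_a^{*}$, rewrite the conditional entropies through $I(\cdot\,;Y|X_m)=H(Y|X_m)-H(Y|X_m,\cdot)$, and conclude from $\Delta>0$ and the strict monotonicity of $h\mapsto 1-\exp(-h)$. The only cosmetic difference is the reference point for factoring: the paper factors both bounds against $\exp(-H(Y|X_m))$, so a factor $\exp(\Delta)>1$ appears on the ideal side, whereas you factor against $\exp(-H(Y|X_m,X_a^{*}))$, so a factor $\exp(-\Delta)<1$ appears on the base side; your conclusion $\text{Upper}(P^{base}_e)>\text{Upper}(P^{ideal}_e)$ is also more careful than the paper's closing line $P^{ideal}_e<P^{base}_e$, since the argument only establishes the inequality between the bounds, not between the errors themselves.
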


\begin{proof}
    Let $P^{base}_{e}$ and $P^{ideal}_{e}$ be the error bounds for the base and ideal scenarios. Based on Proposition~\ref{thm:error_reduction}, they are expressed as follows:
\begin{equation}
    P^{base}_{e} \leq 1 - \exp\big(-H(Y|X_{m})\big) \cdot \exp(I_{s}^{base}),
    \label{eq:bayes_error_base}
\end{equation}
\begin{equation}
    P^{ideal}_{e} \leq 1 - \exp\big(-H(Y|X_{m})\big) \cdot \exp(I_{s}^{ideal}).
    \label{eq:bayes_error_ideal}
\end{equation}
    Substituting $I_{s}^{ideal} = I_{s}^{base} + \Delta$ from Def.~\ref{def:interference_degradation}, we can derive:
    \begin{equation}
        \resizebox{0.8\linewidth}{!}{$
            P^{ideal}_{e} \leq 1 - \exp\big(-(H(Y|X_{m}) - (I_{s}^{base} + \Delta))\big).
        $}
    \end{equation}
    Consequently,
    \begin{equation}
        \resizebox{0.8\linewidth}{!}{$
            P^{ideal}_{e} \leq 1 - \underbrace{\exp\big(-H(Y|X_{m})\big) \cdot \exp(I_{s}^{base})}_{\text{Term}_{base}} \cdot \exp(\Delta).
        $}
    \end{equation}
    Since $\Delta > 0$, we have:
    \begin{equation}
        \resizebox{0.5\linewidth}{!}{$
            \text{Term}_{base} \cdot \exp(\Delta) > \text{Term}_{base}.
        $}
    \end{equation}
    Thus, we conclude:
    \begin{equation}
        \resizebox{0.25\linewidth}{!}{$
            P^{ideal}_{e} < P^{base}_{e}.
        $}
    \end{equation}
    This indicates that recovering $X_{a}^{*}$ is necessary to achieve the optimal error bound.
\end{proof}

\begin{figure*}[t]
  \vskip 0.2in
  \begin{center}
    \centerline{\includegraphics[width=0.8\textwidth]{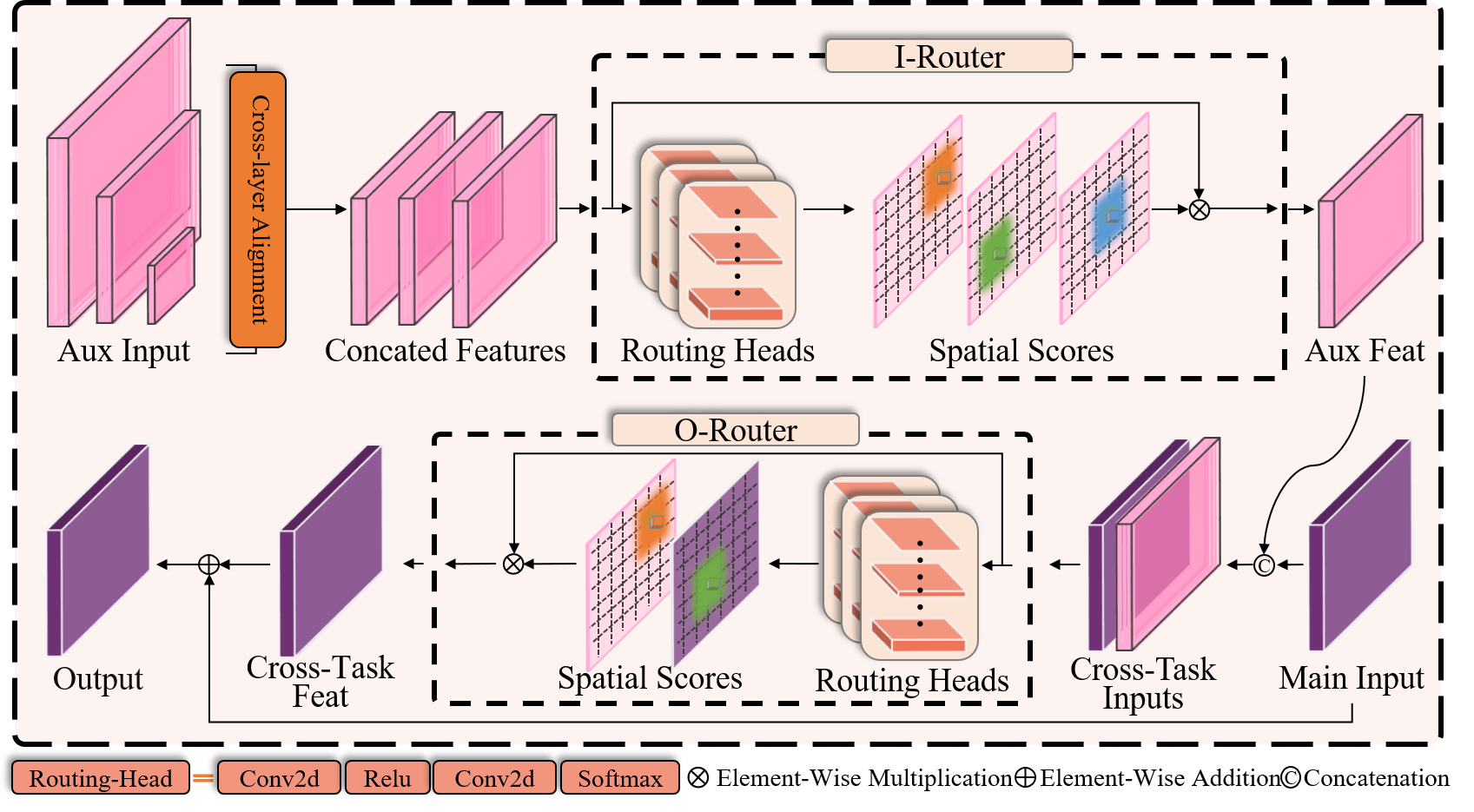}}
    \caption{Illustration of the DLR. It comprises the I-Router for adaptive multi-scale feature aggregation and the O-Router for dynamic residual gating, facilitating robust cross-task knowledge transfer.}
    \label{fig:DLR}
  \end{center}
\end{figure*}

\subsection{Methodology}

\subsubsection{Socialized Co-Perception Framework}
\label{sec:framework}
To overcome task isolation and achieve efficient target locking and segmentation, we propose the SCP framework. As illustrated in Figure~\ref{fig:SCP} and Algorithm~\ref{alg:inference}, SCP operates as a unified ecosystem. Core to this framework is the air-ground cross-task coevolution mechanism, where the DLR module $\Phi$ facilitates the joint optimization of detection and segmentation. This process yields the evolved parameters $\Theta^*_{det}$ and $\Theta^*_{seg}$, which encapsulate shared multi-granularity knowledge. Leveraging these evolved capabilities, SCP executes a progressive air-ground inference pipeline:
\begin{equation}
\scalebox{0.9}{$
M_{refine} = \mathcal{M}_{seg}^g \left(
\Psi_{match} \left(
\underbrace{\mathcal{E}(\mathcal{M}_{det}^a)}_{q_{sem}},
\mathcal{M}_{det}^g
\right)
\right)
$}.
\end{equation}
where $\mathcal{M}_{det}^a$, $\mathcal{M}_{det}^g$, and $\mathcal{M}_{seg}^g$ denote the aerial detection, ground detection, and ground segmentation networks, respectively, parameterized by the co-evolved weights $\Theta^*_{det}$ and $\Theta^*_{seg}$. The operator $\mathcal{E}(\cdot)$ transforms the aerial detection results into a high-level semantic query vector $q_{sem}$, while $\Psi_{match}(\cdot)$ performs cross-view association to lock the target among ground candidates. Finally, $M_{refine}$ represents the output identity-aware fine-grained segmentation mask.

\begin{algorithm}[H]
  \caption{SCP Framework.}
  \label{alg:inference}
  \textbf{Input}: DLR module $\Phi$, aerial video stream $I_a$, ground video stream $I_g$; \\
  \textbf{Output}: Fine-grained mask sequence $\mathcal{V}_{out}$;
  
  \begin{algorithmic}[1]
    \STATE \COMMENT{\textit{Step 1: Cross-task coevolution}}
    \STATE $\Theta^*_{det}, \Theta^*_{seg}, \Phi^* \leftarrow \text{Coevolve}(\Theta_{det}, \Theta_{seg}; \Phi)$;
    \STATE \COMMENT{\textit{Step 2: Execute progressive inference pipeline}}
    
    \STATE {\bfseries Phase 1: Global Localization}
    \STATE $\mathcal{B}_a \leftarrow \mathcal{M}_{det}^a(I_a; \Theta^*_{det}, \Theta^*_{seg}, \Phi^*)$;
    
    \STATE {\bfseries Phase 2: Cross-View Search}
    \STATE $q_{sem} \leftarrow \mathcal{E}(\mathcal{B}_a)$;
    
    \FOR{frame $I_g^t$ in $I_g$}
      \STATE $\mathcal{C}_g \leftarrow \mathcal{M}_{det}^g(I_g^t; \Theta^*_{det}, \Theta^*_{seg}, \Phi^*)$;
      \STATE $\hat{o}_g^t \leftarrow \Psi_{match}(q_{sem}, \mathcal{C}_g)$;
      \IF{$\hat{o}_g^t$ is valid}
        \STATE {\bfseries Phase 3: Identity-Aware Fine-Grained Parsing}
        \STATE $M_{refine} \leftarrow \mathcal{M}_{seg}^g(\hat{o}_g^t; \Theta^*_{seg}, \Theta^*_{det}, \Phi^*)$;
        \STATE $\mathcal{V}_{out}.\text{append}(M_{refine})$;
      \ENDIF
    \ENDFOR
    
    \STATE {\bfseries Return} $\mathcal{V}_{out}$;
  \end{algorithmic}
\end{algorithm}

\noindent \textbf{Phase 1: Global Localization.}
Operating as the global observer, the UAV inputs aerial video streams ($I_{a}$) into the aerial detector initialized with the co-evolved parameters $\{\Theta^*_{det}, \Theta^*_{seg}\}$ and the optimized DLR module $\Phi^*$. By dynamically routing auxiliary features via $\Phi^*$, the detector enhances its representation capabilities, generating robust global candidate bounding boxes $\mathcal{B}_a$:
\begin{equation}
 \mathcal{B}_a \leftarrow \mathcal{M}_{det}^a(I_a; \Theta^*_{det}, \Theta^*_{seg}, \Phi^*).
\label{eq:Global_localization}
\end{equation}

\noindent \textbf{Phase 2: Cross-View Search.}
This module bridges the viewpoint gap to lock the target on the ground. First, the aerial target region is cropped from the detected aerial candidates and transformed by $\mathcal{E}(\cdot)$ into a semantic query embedding. In implementation, $\mathcal{E}(\cdot)$ is instantiated by the ReID encoder rather than an additional learnable query head: each aerial crop is resized to $256\times128$, normalized, and fed into a ResNet-50-IBN-a network followed by global average pooling and a BN neck, yielding a 2048-dimensional feature that is further $\ell_2$-normalized to form the query representation $q_{sem}$. Second, the ground vehicle acts as the local executor. It generates ground candidates $\mathcal{C}_g$ from ground video streams $I_g^t$ using the coupled parameter set and the routing module $\Phi^*$:
\begin{equation}
\mathcal{C}_g = \mathcal{M}_{det}^g(I_g^t; \Theta^*_{det}, \Theta^*_{seg}, \Phi^*).
\label{eq:ground_candidates}
\end{equation}
Thus, the system computes the matching similarity between $q_{sem}$ and $\mathcal{C}_g$ to return the specific target $\hat{o}_{g}^t$:
\begin{equation}
\hat{o}_g^t \leftarrow \Psi_{match}(q_{sem}, \mathcal{C}_g).
\label{eq:matching}
\end{equation}

\noindent \textbf{Phase 3: Identity-Aware Fine-Grained Parsing.}
Once the target is spatially locked, the framework performs pixel-level refinement. The bounding box of the valid target $\hat{o}_g^t$ is used to crop the region of interest. This RoI is then processed by the segmentation network. To ensure precise boundary delineation, the segmentation model, parameterized by $\Theta^*_{seg}$, is conditionally gated by $\Phi^*$ to incorporate auxiliary detection knowledge from $\Theta^*_{det}$. This step captures fine-grained attributes invisible from the aerial view:
\begin{equation}
M_{refine} \leftarrow \mathcal{M}_{seg}^g(\hat{o}_g^t; \Theta^*_{seg}, \Theta^*_{det}, \Phi^*).
\label{eq:seg}
\end{equation}

\subsubsection{DLR-Driven Cross-Task Coevolution}
As formally derived in Proposition~\ref{thm:error_reduction} and Proposition~\ref{thm:complementarity_necessity}, the performance evolution of air-ground models similarly depends on sociability and complementarity. To this end, we employ the main-auxiliary training strategy (Algorithm~\ref{alg:training}) to facilitate social interaction across models, as illustrated in Figure~\ref{fig:SCP}. Furthermore, to ensure the complementarity of interactive information, we propose the DLR module, a hierarchical gating module inserted between different task backbones that reformulates feature interaction as a dense expert routing process.

\begin{algorithm}[H]
  \caption{Main-Auxiliary Training Strategy.}
  \label{alg:training}
  \textbf{Input}: Datasets $\mathcal{D}_{det}, \mathcal{D}_{seg}$, models $\Theta_{det}, \Theta_{seg}$, DLR $\Phi$, main task $m \in \{det, seg\}$; \\
  \textbf{Output}: Optimized $\Theta^*_{det}, \Theta^*_{seg}, \Phi^*$;

  \begin{algorithmic}[1]
    \STATE \textbf{Phase 1: Expert Pretraining}
    \STATE Initialize $\Theta_{det}, \Theta_{seg}$ independently on their respective datasets;

    \STATE \textbf{Phase 2: Main Task Training}
    \STATE Define auxiliary task $a \in \{det, seg\} $;
    \STATE Freeze $\Theta_a$, unfreeze $\Theta_m, \Phi$;
    
    \WHILE{not converged}
      \STATE $F_{m}, F_{a} \leftarrow \Theta_m(\mathcal{D}_m), \Theta_a(\mathcal{D}_m)$;
      \STATE $F_{out} \leftarrow \text{DLR}(F_{m}, F_{a}; \Phi)$;
      \STATE Update $\Theta_m, \Phi$ by minimizing $\mathcal{L}_m(F_{out}, Y_m)$;
    \ENDWHILE
    
    \STATE {\bfseries Return} $\Theta^*_{det}, \Theta^*_{seg}, \Phi^*$;
  \end{algorithmic}
\end{algorithm}

As illustrated in Figure~\ref{fig:DLR}, at the $l$-th interaction layer, let the main-branch feature be $F_{main} \in \mathbb{R}^{C_l \times H_l \times W_l}$, where $C_l \in \{256,512,1024\}$ corresponds to layer1, layer2, and layer3, respectively. We first define multi-scale auxiliary features $\mathcal{F}_{aux} = \{f_k\}$ as a set of candidate experts, where $k$ represents varying semantic depths. After bilinear interpolation and $1\times1$ Conv-BN-ReLU projection, each candidate expert is aligned to the same spatial and channel dimensions as the main branch, namely $f_k \in \mathbb{R}^{C_l \times H_l \times W_l}$. To prioritize the most informative granularity without hard selection, parallel routing heads $\mathcal{G}_{in}$ generate a spatially adaptive consensus $F'_{aux}$ via dense soft-routing,
\begin{equation}
F'_{aux} = \sum_{k} \left( \mathrm{Softmax}\left(\frac{\mathcal{G}_{in}(f_k)}{\tau}\right) \odot f_k \right).
\label{eq_dlr_in_router}
\end{equation}
where $\mathcal{G}_{in}(f_k) \in \mathbb{R}^{1 \times H_l \times W_l}$ denotes the spatial routing score of the $k$-th expert, and the Softmax is performed across the expert dimension at each spatial location. Therefore, the input of I-Router is $\mathcal{F}_{aux} = \{f_k\}$, and its output is the aggregated auxiliary feature $F'_{aux} \in \mathbb{R}^{C_l \times H_l \times W_l}$. Here, $\tau$ modulates the routing sharpness. Functioning as a pixel-wise scale selector, this mechanism dynamically aggregates the most salient granularity for each spatial coordinate, thereby ensuring scale-invariance before interaction.

Subsequently, to control the information flow and filter negative transfer, we employ a residual gating mechanism. We employ a gating network $\mathcal{G}_{out}$ to predict a pair of modulation coefficients $[\alpha_{main}, \alpha_{aux}]$ based on the joint representation
\begin{equation}
Z = \mathrm{Concat}(F_{main}, F'_{aux}),
\label{eq_dlr_joint_repr}
\end{equation}
where $Z \in \mathbb{R}^{2C_l \times H_l \times W_l}$. Then,
\begin{equation}
[\alpha_{main}, \alpha_{aux}] = \mathrm{Softmax}\left(\frac{\mathcal{G}_{out}(Z)}{\tau}\right),
\label{eq_dlr_out_router}
\end{equation}
where $\alpha_{main}, \alpha_{aux} \in \mathbb{R}^{1 \times H_l \times W_l}$. Therefore, the input of O-Router is the joint representation $\mathrm{Concat}(F_{main}, F'_{aux})$, and its output is a pair of spatial gating maps that task-condition the retention of main-branch features and the injection of auxiliary features, respectively. These gates govern the selective injection process via a residual update,
\begin{equation}
F_{out} = \mathcal{R}(F_{main}, F'_{aux}; \alpha_{main}, \alpha_{aux}),
\label{eq_dlr_residual_update}
\end{equation}
where $\mathcal{R}(\cdot)$ denotes the residual fusion function parameterized by the predicted spatial gates, and $F_{out} \in \mathbb{R}^{C_l \times H_l \times W_l}$. This design effectively decouples the feature space, allowing the model to prioritize intrinsic representations where the main view is sufficient, while selectively absorbing complementary cues to resolve ambiguities.

\section{Experiments}

In this section, we use ResNet-50 as the common backbone. We employ Faster R-CNN, DeepLabv3, and ResNet-50-IBN as baselines for the detection, segmentation, and ReID tasks, respectively. Specifically, we conduct the evaluation covering comparisons with representative adapted interaction methods, ablation studies on DLR modules, and holistic efficacy analysis. We further discuss system robustness across varying collaboration scenarios and UAV altitudes. More experiments and implementation details are provided in the following subsections.

\subsection{Implementation Details}
\label{sec:Implementation_Details}

We evaluate the proposed method on the AGPC dataset across four subsets: aerial detection, ground detection, ground segmentation, and air-ground ReID integrated from 10m and 15m altitudes. 

This section details the comparative methods and datasets utilized in our experiments. All models were implemented using the PyTorch framework and trained on eight NVIDIA RTX 3090 GPUs. Specifically, for detection-centric models, we employed Stochastic Gradient Descent (SGD) as the optimizer with a momentum of 0.9 and a weight decay of $1 \times 10^{-4}$. These models utilized the SmoothL1Loss function and were trained for 12 epochs with a batch size of 2 and an initial learning rate of 0.02. For the ReID task, we adopted the Adam optimizer. The models were trained for 120 epochs with a batch size of 8. The learning rate was initialized at $3.5 \times 10^{-4}$. The objective function incorporated Triplet Loss and Cross-Entropy Loss with label smoothing. Finally, regarding segmentation-centric models, we utilized SGD (momentum 0.9, weight decay $1 \times 10^{-4}$) combined with the CrossEntropyLoss function. These models were trained for 20,000 iterations with a batch size of 4 and a learning rate of 0.01.

\subsection{Compared Methods}

To validate the effectiveness of DLR, we compare it with several representative interaction methods from recent multi-task and heterogeneous learning literature. These baselines cover different interaction paradigms, including general knowledge transfer, selective collaboration, dynamic task routing, and cross-modal alignment, thereby providing a comprehensive reference for evaluating interaction mechanisms in our air-ground cross-view setting.

\begin{itemize}
    \item \textbf{NKT}~\cite{E4Intern} builds a general vision ecosystem through multi-stage pre-training.
    \item \textbf{DISC}~\cite{E1-dsc} enhances multi-task learning via dynamic hierarchical selective collaboration.
    \item \textbf{DTF}~\cite{E3-tdf} improves task-specific representation through adaptive transformer path routing.
    \item \textbf{TVAM}~\cite{E2-tvam} bridges cross-modal gaps via guidance-driven heterogeneous feature alignment.
    \item \textbf{BCSI}~\cite{BCSI} enables selective cross-task transfer via bidirectional channel interaction.
    \item \textbf{ACFA}~\cite{ACFA} enhances task interaction via adaptive cross-fusion attention.
    \item \textbf{DDIM}~\cite{DDIM} captures spatial dependencies through dual-dimensional branch interaction.
    \item \textbf{SSFM}~\cite{SSFM} fuses semantic and structural cues for cross-task integration.
\end{itemize} 
Since these methods were not originally designed for the air-ground cross-task framework, we adapt them under a unified protocol by keeping all modules unchanged except for replacing the DLR interaction block. These modules are inserted at the same feature interaction interface as DLR, while ReID is kept as an independent branch. This ensures a fair comparison of interaction mechanisms under the same air-ground cross-view setting.

\subsubsection{Necessity of Socialized Collaboration} The data in Table~\ref{tab:Parallel_compact_DLR} indicate that all model variants equipped with interaction mechanisms achieve considerably better performance than the Vanilla baseline without social collaboration. By constructing bidirectional interactive correlations between aerial global cues and ground fine-grained cues, the collaborative gain of the model is effectively improved. This sufficiently demonstrates that in scenarios with significant air-ground feature heterogeneity, social correlation between aerial and ground models serves as a key basis for achieving steady performance improvement.

\subsubsection{Complementarity Based on Task-Conditioned Routing}DLR further surpasses all adapted interaction modules, exhibiting its most significant advantage on the segmentation task, where it improves by 2.17\% over the runner-up, and ultimately yielding a 3.73\% increase in the $G_{co}$ metric. This result suggests that the core challenge in air-ground multi-task perception lies in regulating the content, semantic scale, and downstream objectives of cross-view information transfer. By decoupling spatial expert selection from task-conditioned output modulation, DLR effectively extracts fine-grained complementary information from cross-view features, thereby demonstrating a significant advantage.

\begin{table}[H]
    \caption{Comparison of performance across different methods on AGPC. The $1^{st}$/$2^{nd}$ best results are
  indicated in {\color{red}red}/{\color{blue}blue}.}
    \label{tab:Parallel_compact_DLR}
    \centering
    \footnotesize
    \begin{tabular*}{\linewidth}{@{\extracolsep{\fill}}lccc@{}}
      \toprule
      Method & Det-A(\%)$\uparrow$ & Seg(\%)$\uparrow$ & \boldmath$G_{co}$(\%)$\uparrow$ \\
      \midrule
      Vanilla & 76.12 & 37.72 & - \\
      NKT~\cite{E4Intern} & 88.07 & 52.82 & 27.87 \\
      DISC~\cite{E1-dsc} & 87.15 & 44.60 & 17.32 \\
      DTF~\cite{E3-tdf} & 88.68 & 52.17 & 27.40 \\
      TVAM~\cite{E2-tvam} & 88.45 & 46.19 & 19.33 \\
      ACFA~\cite{ACFA} & 88.28 & \textbf{\color{blue}54.03} & \textbf{\color{blue}29.61} \\
      BCSI~\cite{BCSI} & 88.08 & 53.86 & 29.25 \\
      DDIM~\cite{DDIM} & \textbf{\color{blue}88.73} & 53.31 & 28.95 \\
      SSFM~\cite{SSFM} & 88.07 & 53.29 & 28.68 \\
      DLR (Ours) & ${\textbf{\color{red}89.59}_{\color{red}(+0.86)}}$ & ${\textbf{\color{red}56.20}_{\color{red}
  (+2.17)}}$ & ${\textbf{\color{red}33.34}_{\color{red}(+3.73)}}$ \\
      \bottomrule
    \end{tabular*}
    \vskip -0.1in
  \end{table}

\subsection{Ablation Analysis of DLR Modules}
\subsubsection{Hierarchical Mechanism of Dual Routing}
The ablation study in Table~\ref{tab:ablation_Router} reveals the hierarchical interaction of the dual routing modules. Enabling the I-Router or O-Router individually raises $G_{co}$ to 26.05\% and 29.66\%, respectively, demonstrating that input-side expert filtering and output-side task-conditioned modulation can independently optimize feature fusion. When both modules are activated, the full DLR improves upon the no-routing baseline by 0.69\%, 6.79\%, and 9.45\% on Det-A, Seg, and $G_{co}$, respectively. These step-wise gains suggest that effective air-ground interaction follows a coarse-to-fine hierarchical mechanism: the I-Router first suppresses irrelevant multi-scale responses and retains transferable auxiliary cues, after which the O-Router selectively allocates features according to the demands of downstream tasks.

\subsubsection{Asymmetric Gains Across Task Granularities} DLR exhibits asymmetric performance gains across detection and segmentation tasks. The relatively small absolute improvement on Det-A reflects the limited optimization space for the coarse-grained localization task on the existing dataset: AGPC contains only three macro-detection categories, and strong interaction baselines are already saturated on Det-A (87.15\%--88.68\%). In contrast, the segmentation task requires pixel-level discrimination for dense categories, imposing stricter requirements on the semantic compatibility of cross-view cues. The more significant gain in segmentation metrics demonstrates that as the system transitions from coarse-grained target locking to fine-grained scene parsing, the routing mechanism plays a critical role in driving performance gains.

\begin{table}[H]
  \caption{Ablation studies of the effect of related components. $\checkmark$ denotes DLR with the corresponding module enabled.}
  \label{tab:ablation_Router}
  \centering
  \scriptsize
  \setlength{\tabcolsep}{2pt}
  \resizebox{\linewidth}{!}{
    \begin{tabular}{@{}lccccc@{}}
      \toprule
      Method & I-Router & O-Router & Det-A (\%) $\uparrow$ & Seg (\%) $\uparrow$ & \boldmath$G_{co}$ (\%) $\uparrow$ \\
      \midrule
      \multirow{4}{*}{DLR} 
      & - & - & 88.90 & 49.41 & 23.89 \\
      & $\checkmark$ & - & 89.53 & 51.04 & 26.05 \\
      & - & $\checkmark$ & 89.54 & 53.26 & 29.66 \\
      & $\checkmark$ & $\checkmark$ & ${\textbf{\color{red}89.59}_{\color{red}(+0.69)}}$ & ${\textbf{\color{red}56.20}_{\color{red}(+6.79)}}$ & ${\textbf{\color{red}33.34}_{\color{red}(+9.45)}}$ \\
      \bottomrule
    \end{tabular}
  }
  \vskip -0.1in
\end{table}

\subsection{Overall-Level Performance Analysis of SCP}
\subsubsection{System-Level Optimization} As shown in Table~\ref{tab:Compact_SCP}, the introduction of the DLR module improves aerial and ground detection performance by 13.47\% and 10.90\%, segmentation performance by 18.48\%, and the overall ADP metric by 7.86\%. The performance improvements in localization and parsing validate that SCP facilitates effective complementation between coarse-grained global cues and fine-grained downstream knowledge. This enables robust cross-view and cross-task collaboration among air-ground individuals, ultimately realizing effective system-level optimization.

\subsubsection{Feature Decoupling and Collaboration Strategy} The performance of ReID in Table~\ref{tab:Compact_SCP} remains unchanged at 21.20\% because this task does not participate in the DLR-driven joint optimization. This setup aims to maintain the representational independence of the identity matching task while spatially aware tasks undergo collaborative optimization, avoiding negative interference between tasks.
\begin{table}[H]
  \caption{Comparison of SCP performance across all sub-tasks. CC denotes cross-task coevolution.}
  \label{tab:Compact_SCP}
  \centering
  \setlength{\tabcolsep}{1.5pt}
    \resizebox{\linewidth}{!}{
    \begin{tabular}{@{}lccccc@{}}
      \toprule
      Method & Det-A (\%) $\uparrow$ & Det-G (\%) $\uparrow$ & Seg (\%) $\uparrow$ & ReID (\%) $\uparrow$ & ADP (\%) $\uparrow$ \\
      \midrule
      SCP w/o CC    & 76.12 & 71.25 & 37.72 & 21.20 & 38.91 \\
      SCP w/ CC & ${\textbf{\color{red}89.59}_{\color{red}(+13.47)}}$ & ${\textbf{\color{red}82.15}_{\color{red}(+10.90)}}$ & ${\textbf{\color{red}56.20}_{\color{red}(+18.48)}}$ & \textbf{\color{red}21.20} & ${\textbf{\color{red}46.77}_{\color{red}(+7.86)}}$ \\
      \bottomrule
    \end{tabular}
    }
  \vskip -0.1in
\end{table}

\subsection{Cross-Scenario Robustness Analysis}
\subsubsection{Cross-Scenario Collaborative Robustness} To verify the generalization capability of the SCP collaborative framework, we train the ground segmentation task under three data scenarios, namely pure ground G, air-ground fusion AG, and pure aerial A. Table~\ref{tab:AG_ablation_stacked} shows that the relative gains remain positive across all scenarios, with $G_{co}$ reaching 27.52\%, 29.67\%, and 33.34\%, respectively. This result indicates that the proposed routing mechanism does not overfit a single viewpoint topology and remains effective under diverse scenarios.

\subsubsection{Cross-View Complementarity as a Driver of Performance} Experiments further show that the collaborative gain peaks in the Aerial-Only (A) setting, where the segmentation improvement rises from 15.47\% in G and 16.38\% in AG to 18.48\% in A. This trend indicates that the Ground-Only (G) setting mainly provides the same-view local assistance, whereas the joint Air-Ground (AG) setting introduces blended contextual cues. By contrast, the Aerial-Only (A) setting provides distinctive global constraints that are more beneficial for downstream ground parsing. Consequently, after task-conditioned routing, the complementarity between global aerial cues and local ground details can be effectively transformed into auxiliary supervision for fine-grained ground understanding, thereby improving overall model performance.

\begin{table}[H]
  \caption{Comparison of collaborative perception benefits across different collaboration scenarios. Scenarios A, G, and AG denote interaction using aerial, ground, and fused air-ground detection data with segmentation data, respectively.}
  \label{tab:AG_ablation_stacked}
  \centering
  \footnotesize
  \setlength{\tabcolsep}{2.5pt}
  \begin{tabular*}{\linewidth}{@{\extracolsep{\fill}}llccc@{}}
    \toprule
    Scenario & Method & Det (\%) $\uparrow$ & Seg (\%) $\uparrow$ & \boldmath$G_{co}$ (\%) $\uparrow$ \\
    \midrule
    \multirow{2}{*}{G}
    & Vanilla & 71.25 & 37.72 & -- \\
    & SCP & ${\textbf{\color{red}82.15}_{\color{red}(+10.90)}}$ & ${\textbf{\color{red}53.19}_{\color{red}(+15.47)}}$ & $\textbf{\color{red}27.52}$ \\
    \midrule
    
    \multirow{2}{*}{AG}
    & Vanilla & 74.43 & 37.72 & -- \\
    & SCP & ${\textbf{\color{red}86.28}_{\color{red}(+11.85)}}$ & ${\textbf{\color{red}54.10}_{\color{red}(+16.38)}}$ & $\textbf{\color{red}29.67}$ \\
    \midrule
    
    \multirow{2}{*}{A}
    & Vanilla & 76.12 & 37.72 & -- \\
    & SCP & ${\textbf{\color{red}89.59}_{\color{red}(+13.47)}}$ & ${\textbf{\color{red}56.20}_{\color{red}(+18.48)}}$ & $\textbf{\color{red}33.34}$ \\
    \bottomrule
  \end{tabular*}
  \vskip -0.1in
\end{table}


      
      



\subsection{Cross-Altitude Domain Gap Analysis}
Table~\ref{tab:ReID_altitudes} shows that changes in UAV altitude pose a major challenge for air-ground association. As the altitude increases from 10m to 15m, Rank-1 accuracy falls from 36.00\% to 15.90\%, while mAP decreases from 33.60\% to 19.90\%. This drop suggests that aerial identity features are easily affected by altitude. At higher altitudes, targets occupy fewer pixels, local appearance cues become less clear, and viewpoint-induced geometric changes become more pronounced, making direct cross-view matching less reliable. The result also clarifies the roles of the two views in our pipeline. Aerial observations are useful for global localization and candidate search, but their identity cues alone are not stable enough under altitude changes. Ground-view features, in contrast, retain finer local structures and more consistent identity information, and can therefore provide auxiliary evidence for feature alignment. This finding is in line with the progressive design of SCP, where cross-view association links aerial global localization with ground-based fine-grained parsing. 


\begin{table}[H]
  \caption{ReID performance across datasets with varying UAV altitudes.}
  \label{tab:ReID_altitudes}
  \centering
  \footnotesize
    \begin{tabular*}{\linewidth}{@{\extracolsep{\fill}}lcccc@{}}
      \toprule
      Height & \multicolumn{1}{c}{Rank-1 (\%) $\uparrow$} & \multicolumn{1}{c}{Rank-5 (\%) $\uparrow$} & \multicolumn{1}{c}{Rank-10 (\%) $\uparrow$} & \multicolumn{1}{c}{mAP (\%) $\uparrow$} \\
      \midrule
      10m & \textbf{\color{red}36.00} & \textbf{\color{red}47.00} & \textbf{\color{red}51.00} & \textbf{\color{red}33.60} \\
      15m & 15.90 & 28.40 & 36.40 & 19.90 \\
      all & 21.20 & 31.20 & 39.70 & 22.10 \\
      \bottomrule
    \end{tabular*}
  \vskip -0.1in
\end{table}

\section{Discussion}

\textbf{Unified Architectural Synergy:} As summarized in Table \ref{tab:paradigm}, existing paradigms fail to simultaneously satisfy cross-task interaction, cross-view synergy, and practical applicability in real environments. SCP addresses these limitations by synthesizing detection, ReID, and segmentation into a unified ecosystem. Unlike standard MTL restricted to intra-view sharing or 2D/3D AGPS, confined to single-task cooperation, SCP leverages the DLR to enable adaptive cross-task knowledge transfer, achieving a more comprehensive coverage under the considered setting.

\textbf{Coarse-to-Fine Inference:} The proposed pipeline unlocks complex long-horizon tasks by chaining aerial global localization with ground-based fine-grained parsing. This design allows the aerial model to serve as a global observer to overcome occlusions, while the ground model acts as a local executor for pixel-level refinement. This progressive collaboration not only improves performance through complementary exploitation but also offers a robust solution for real-world air-ground applications, such as disaster response.
\begin{table}[H]
  \caption{Comparison of different paradigms. 2D AGPS denotes 2D Air-Ground Perception Systems, and 3D AGPS denotes 3D Air-Ground Perception Systems.}
  \label{tab:paradigm}
  \centering
  \footnotesize
  \begin{tabular*}{\linewidth}{@{\extracolsep{\fill}}lccc@{}}
    \toprule
    Paradigm & Cross-task & Cross-view & Applicability \\
    \midrule
    MTL & $\checkmark$ & $\times$ & $\checkmark$ \\
    2D AGPS & $\times$ & $\checkmark$ & $\checkmark$ \\
    3D AGPS & $\times$ & $\checkmark$ & $\times$ \\
    \midrule
    SCP & $\checkmark$ & $\checkmark$ & $\checkmark$ \\
    \bottomrule
  \end{tabular*}
  \vskip -0.1in
\end{table}


\section{Conclusion}

This paper studies air-ground collaborative perception as a progressive cross-task problem under heterogeneous views. We introduce AGPC, a synchronized UAV-UGV benchmark that connects detection, re-identification, and semantic segmentation within the same video corpus, enabling systematic evaluation of cross-view and cross-task collaboration. Based on AGPC, we propose SCP, a coarse-to-fine framework that organizes perception from aerial global localization to ground target association and identity-aware parsing. Its core module, DLR, selectively routes multi-scale auxiliary features and modulates task-specific information to transfer complementary cues while reducing negative interference. Experiments show that SCP improves both task-level performance and system-level collaborative perception over adapted interaction baselines. More importantly, the results suggest that effective collaboration benefits from stage-aware interaction rather than indiscriminate feature sharing, and that aerial cues are most useful when selectively used to guide fine-grained ground understanding. Future work will explore stronger cross-view association, temporal reasoning, and scalable multi-agent perception.

\bibliographystyle{IEEEtran}
\bibliography{paper}

\end{document}